\documentclass{article}
\usepackage{amsmath}

     \PassOptionsToPackage{numbers, compress}{natbib}

 \usepackage[final,main]{neurips_2026}

\usepackage{adjustbox} 
\usepackage{array}     
\usepackage[utf8]{inputenc} 
\usepackage[T1]{fontenc}    
\usepackage{hyperref}       
\usepackage{etoc}
\usepackage{url}            
\usepackage{booktabs}       
\usepackage{amsfonts}       
\usepackage{nicefrac}       
\usepackage{microtype}      
\usepackage{xcolor}         
\usepackage{graphicx}
\usepackage{multirow}
\usepackage{wrapfig}

\usepackage{indentfirst}
\title{All Roads Lead to Rome: Flow-driven Multi-Anchor Exploration for Open-Environment Active 3D Mapping}
\usepackage[ruled,vlined,linesnumbered]{algorithm2e}
\usepackage{colortbl}
\usepackage[table]{xcolor}
\definecolor{customcolor}{HTML}{CCE8CF} 
\usepackage{bm}
\usepackage{amssymb}
\usepackage{subcaption}  
\usepackage{caption}
\usepackage{placeins}
\usepackage{algorithm}
\usepackage{algorithmic}



\hypersetup{
    colorlinks=true,
    linkcolor=blue,
    citecolor=blue,
    urlcolor=black
}

\author{
Yang Li\textsuperscript{1} \quad
Aming Wu\textsuperscript{2} \quad
Zihao Zhang\textsuperscript{1} \quad
Ziju Han\textsuperscript{1} \quad
Sijia Zhang\textsuperscript{1} \quad
Yahong Han\textsuperscript{1}\thanks{Corresponding author.}
\\[0.6em]
\textsuperscript{1}School of Artificial Intelligence, Tianjin University, China
\\
\textsuperscript{2}School of Computer Science and Information Engineering, Hefei University of Technology, China
\\[0.4em]
{\tt\small
\{liyang1389, zhangzihao2490, hanziju, 3024244296, yahong\}@tju.edu.cn
}
\\
{\tt\small amwu@hfut.edu.cn}
}

\usepackage{amsthm}
\usepackage{mdframed}
\usepackage{etoolbox}
\usepackage{float}
\newtheoremstyle{neuripsplain}%
  {6pt}   
  {6pt}   
  {\itshape} 
  {}      
  {\bfseries} 
  {.}     
  {0.5em} 
  {}      

\theoremstyle{neuripsplain}
\newtheorem{theorem}{Theorem}[section]
\newtheorem{proposition}[theorem]{Proposition}

\newmdenv[
  skipabove=3pt,
  skipbelow=7pt,
  linewidth=0.3pt,
  linecolor=black,
  backgroundcolor=white,
  roundcorner=0pt,
  innertopmargin=2pt,
  innerbottommargin=2pt,
  innerleftmargin=10pt,
  innerrightmargin=10pt,
  splittopskip=\topskip,
  splitbottomskip=6pt
]{neuripsbox}

\BeforeBeginEnvironment{theorem}{\begin{neuripsbox}}
\AfterEndEnvironment{theorem}{\end{neuripsbox}}

\BeforeBeginEnvironment{proposition}{\begin{neuripsbox}}
\AfterEndEnvironment{proposition}{\end{neuripsbox}}

\BeforeBeginEnvironment{lemma}{\begin{neuripsbox}}
\AfterEndEnvironment{lemma}{\end{neuripsbox}}

\BeforeBeginEnvironment{corollary}{\begin{neuripsbox}}
\AfterEndEnvironment{corollary}{\end{neuripsbox}}

\begin{document}

\maketitle

\begin{abstract}
To advance the development of embodied intelligence, Open-Environment Active 3D Mapping has attracted increasing attention, aiming to perform a long-horizon and shortest trajectory exploration for reconstructing unseen scenarios. Since only limited information about unseen environments is available, methods built on the closed-set assumption, i.e., assuming that the test environments are similar to those seen during training, cannot generalize satisfactorily. In existing active mapping methods, long-horizon exploration is often guided by predicting a coarse long-range goal and then converting it into an executable path. However, this stage is usually formulated as single-point prediction. Under partial observability, the same local observation may correspond to multiple plausible exploration directions, making such deterministic prediction prone to brittle decisions and degraded performance in unseen scenarios. Our experiments further verify that this is a key factor underlying their weak generalization.
To address this issue, we reformulate long-horizon target prediction as conditional multimodal anchor generation using Conditional Flow Matching.
Instead of predicting a single goal, our method learns a conditional distribution over coarse exploration anchors from the current mapping state. These anchors are first converted into executable candidate paths through obstacle-aware planning. We then apply exploration-mode clustering to compress geometrically similar trajectories and reduce candidate redundancy. Finally, a hierarchical selection module selects the most promising mode and reranks paths within it to produce the final executable trajectory. Experiments show that our method improves generalization and reconstruction efficiency in open environments.


\end{abstract}

\section{Introduction}
\noindent 
Active 3D Mapping is a core problem in robotics and digital-twin applications, where an embodied agent must actively move, acquire observations, and reconstruct the geometry of an unknown scene as efficiently as possible. Unlike passive reconstruction~\cite{traditional3D-1,traditional3D-2}, active mapping \cite{georgakis2022uncertainty,  yan2023active} requires not only accurate surface reconstruction but also effective exploration decisions under a limited sensing budget. Thus, the key challenge lies not only in \emph{how to reconstruct}, but also in \emph{how to plan}.
Early works~\cite{NBV,OccAnt,MACARONS} are typically built on the closed-set assumption, i.e., assuming that the test environments are similar to those seen during training. However, in open environments, only limited information about unseen scenes is available at test time, which makes such methods difficult to generalize satisfactorily.

Recently, open-environment benchmarks~\cite{NBP,gleam} have extended active 3D mapping to cross-scene and cross-difficulty settings, placing much stronger demands on generalization. 
In these settings, long-horizon exploration is often guided by first predicting a coarse long-range goal. 
However, existing methods on these benchmarks~\cite{NBP,gleam} still largely formulate this stage as single-point prediction. This formulation is fundamentally limited under partial observability, where the same local observation may admit multiple plausible long-horizon exploration directions.
As shown in Fig.~\ref{fig:1}, single-point regression collapses multiple plausible long-horizon futures into one dominant goal. When the unseen scene continuation changes at test time, it tends to commit to a familiar branch, leading to brittle decisions and weak generalization.

This observation motivates us to move beyond single-goal prediction and instead model long-horizon exploration with multiple plausible goal hypotheses.
Recent progress in generative modeling~\cite{CFM,diffusion} suggests that conditional generation is particularly effective for representing multimodal futures. 
In particular, Conditional Flow Matching (CFM)~\cite{CFM} learns a continuous transport from a simple prior to a conditional target distribution, making it both stable and efficient for multimodal modeling.
However, existing flow-based models in embodied decision-making tasks~\cite{flow1,flow2,flow3,goalflow} are primarily designed for high-dimensional trajectory or action generation, whereas active 3D mapping fundamentally requires modeling long-horizon exploration intent under partial observability rather than directly synthesizing full behaviors.
\begin{figure}[t]
\centerline{\includegraphics[width=\columnwidth]{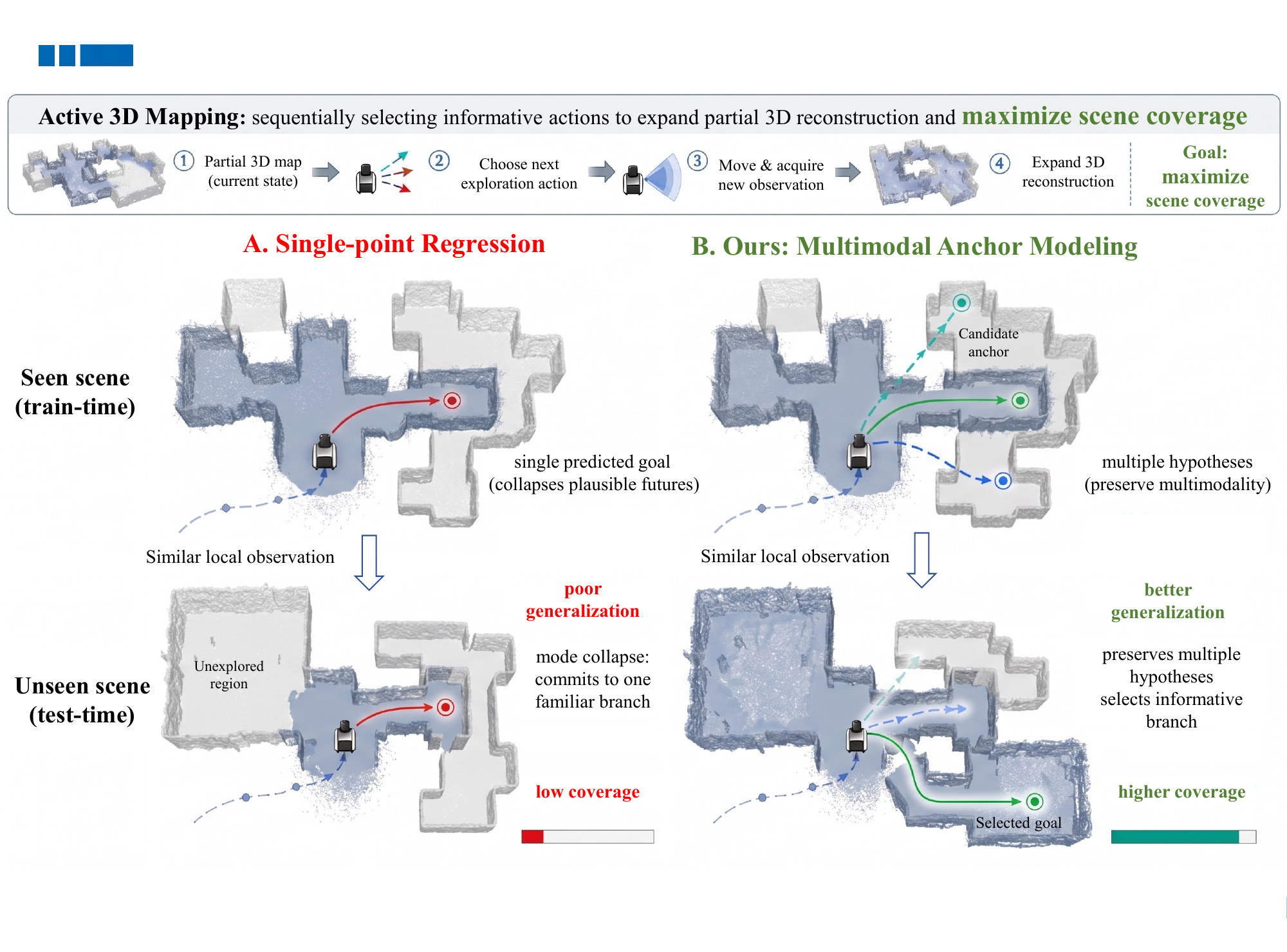}}
\vspace{-5pt}
\caption{\textbf{Motivation.} Under partial observability, a similar local observation may support multiple plausible long-horizon exploration directions. Single-goal prediction collapses these futures into one dominant branch and generalizes poorly when the unseen scene continuation changes. In contrast, multimodal anchor modeling preserves alternative hypotheses before downstream planning, leading to better generalization and higher coverage.}
\label{fig:1}
\end{figure}

To this end, we reformulate long-horizon target prediction in active 3D mapping as a conditional multimodal proposal generation problem, and adapt Conditional Flow Matching (CFM) to this setting in a task-specific manner so as to model multiple plausible long-horizon exploration hypotheses in a low-dimensional anchor space. The sampled anchors are then converted into executable candidate paths through obstacle-aware planning, compressed into exploration modes by anchor-guided clustering, and finally resolved by a hierarchical selection module to produce the final trajectory. Beyond the architectural design, we provide theoretical analysis showing why deterministic single-goal prediction is intrinsically mismatched to multimodal long-horizon exploration under partial observability, and why CFM offers a principled way to learn the resulting conditional anchor distribution. Extensive experiments, including within-dataset difficulty shift and cross-dataset transfer, demonstrate that the proposed framework achieves more robust generalization, higher reconstruction quality, and more efficient exploration in open environments.

Our contributions are threefold.
\textit{\textbf{First}}, we identify deterministic single-goal prediction as
a key limitation of open-environment active 3D mapping under partial
observability, and reformulate long-horizon exploration as conditional
multimodal proposal generation.
\textit{\textbf{Second}}, we propose a flow-driven multi-anchor exploration
framework that learns diverse long-horizon exploration anchors with
Conditional Flow Matching, followed by obstacle-aware planning,
exploration-mode clustering, and hierarchical path selection.
\textit{\textbf{Third}}, experiments on AiMDoom and Matterport3D under
within-dataset difficulty shift and cross-dataset transfer demonstrate
consistent improvements in generalization, reconstruction quality, and
exploration efficiency.

\section{Active 3D Mapping with Open-Environment Generalization}

Let $\mathcal S$ denote an unknown scene, and let $\mathcal X^{\mathrm{GT}}\subset\mathbb R^3$ denote its ground-truth surface. Active 3D mapping seeks to control an agent to acquire observations and reconstruct the scene as completely as possible under a finite budget $T$. At each time step $t$, the agent receives an RGB-D observation $I_t$ and predicts the next pose $c_t=(c_t^{\mathrm{pos}},c_t^{\mathrm{rot}})$ based on the history $\mathcal H_t=\{(I_0,c_0),\ldots,(I_t,c_t)\}$. This defines a partially observable sequential decision problem, where the high-level action satisfies $a_t\sim\pi(\cdot\mid\mathcal H_t)$.
In this work, $a_t$ is instantiated as a long-horizon exploration anchor, and executable candidate paths are recovered from sampled anchors using an explicit obstacle-aware planner. 

Beyond within-distribution mapping performance, we focus on generalization to unseen scenes in open environments. Let $p_{\mathrm{tr}}(\mathcal S)$ and $p_{\mathrm{te}}(\mathcal S)$ denote the training and test scene distributions, where $p_{\mathrm{tr}}(\mathcal S)\neq p_{\mathrm{te}}(\mathcal S)$ is allowed. Given the reconstructed point cloud $\mathcal P_t$, we measure mapping quality by the surface coverage $\operatorname{Cov}(\mathcal P_t,\mathcal X^{\mathrm{GT}})$, and use $\operatorname{AUC}(\pi_\theta)=\frac{1}{T}\sum_{t=1}^{T}\operatorname{Cov}(\mathcal P_t,\mathcal X^{\mathrm{GT}})$ to quantify intermediate mapping efficiency. The overall objective is
\begin{equation}
   \max_{\theta}\ \mathbb E_{\mathcal S\sim p_{\mathrm{te}}(\mathcal S)}
   \big[
   \lambda_1\operatorname{Cov}(\mathcal P_T,\mathcal X^{\mathrm{GT}})
   +\lambda_2\operatorname{AUC}(\pi_\theta)
   \big].
\end{equation}

\section{Method}
\subsection{Overview}
\label{sec:difficulty_invariant_context_encoding}

Active mapping in open environments exhibits two key challenges. First, the same local observation may admit multiple plausible long-horizon exploration directions under partial observability. Second, after geometric planning, these hypotheses often produce many similar candidates, which increases redundancy in downstream decision-making. To address this, we decompose long-horizon exploration into a hierarchical process.
We first represent the current mapping progress by a 2D projected state. Specifically, let $\mathcal P_t$ denote the accumulated reconstructed point cloud at time $t$. We partition $\mathcal P_t$ into $K$ horizontal slices and project them into $K$ local density maps, while the trajectory history is projected into a visitation map, yielding $E_t=\{I^{pc}_{t,1},\ldots,I^{pc}_{t,K},I_t^{\mathrm{hist}}\}$.
The shared context feature is then obtained by
$z_t = f_\phi(E_t)$,
where $f_\phi$ is a mapping-progress encoder. Based on $z_t$, we predict a local obstacle map $\hat{O}_t = g_\psi(z_t)$, which is used for subsequent feasible path planning.
Built on the context feature $z_t$, our method proceeds in four stages, as illustrated in Fig.~\ref{fig:overview}:
\begin{equation}
\label{eq:method_pipeline}
E_t \xrightarrow{\text{Encoder}} z_t \xrightarrow{\text{CFM}} \mathcal{A}_t \xrightarrow{\text{Planning+Clustering}} \{\mathcal{M}_t^{(m)}\}_{m=1}^{M_t} \xrightarrow{\text{Selection}} \tau_t^\star.
\end{equation}
Conditional Flow Matching (CFM) first generates a multimodal anchor set $\mathcal{A}_t$ for long-horizon exploration. The anchors are then converted into candidate paths, clustered into exploration modes $\{\mathcal{M}_t^{(m)}\}_{m=1}^{M_t}$, and hierarchically selected to obtain the final path $\tau_t^\star$.

\subsection{Conditional Flow Matching for Multimodal Anchors}
\label{sec:cfm_multimodal_anchors}

The purpose of this subsection is two-fold. First, we show why deterministic single-goal prediction is intrinsically mismatched to multimodal long-horizon exploration under partial observability. Second, based on this observation, we show why learning a conditional distribution over coarse anchors naturally leads to a CFM-based formulation in a low-dimensional decision space.

\textbf{Revisiting the learning target.}
In open-environment active 3D mapping, the same local observation may correspond to multiple plausible long-horizon exploration directions. Let $p(a\mid z_t)$ denote the conditional distribution of feasible long-horizon exploration anchors under the current context $z_t$. Suppose one instead learns a deterministic predictor $\hat a(z_t)$ under the standard squared loss:
\begin{equation}
\label{eq:deterministic_anchor_risk}
\mathcal{R}(\hat a;z_t)
=
\mathbb{E}_{a\sim p(a\mid z_t)}
\left[
\|\hat a(z_t)-a\|_2^2
\right].
\end{equation}

\begin{proposition}[Deterministic single-goal collapse]
\label{prop:deterministic_anchor_collapse}
The Bayes-optimal deterministic predictor of Eq.~\ref{eq:deterministic_anchor_risk} is the conditional mean
\begin{equation}
\label{eq:deterministic_anchor_mean}
\hat a^\star(z_t)=\mathbb{E}[a\mid z_t].
\end{equation}
Moreover, when $p(a\mid z_t)$ is multimodal and its feasible support is nonconvex due to scene geometry, $\hat a^\star(z_t)$ may lie outside any high-density mode of $p(a\mid z_t)$ and may even become infeasible for downstream planning.
\end{proposition}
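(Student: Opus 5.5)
The proof has two parts: a standard bias–variance computation for the first claim, and a concrete construction for the second, ``may'' claim.

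\textbf{Step 1 (conditional mean).} Assume $\mathbb{E}[\|a\|_2^2\mid z_t]<\infty$ and write $\mu(z_t)=\mathbb{E}[a\mid z_t]$. Add and subtract $\mu(z_t)$ inside the norm in Eq.~\ref{eq:deterministic_anchor_risk}. The cross term $2\langle \hat a(z_t)-\mu(z_t),\,\mathbb{E}[\mu(z_t)-a\mid z_t]\rangle$ vanishes, because $\hat a(z_t)$ is a fixed vector once $z_t$ is fixed. This gives the decomposition
\begin{equation*}
\mathcal{R}(\hat a;z_t)=\|\hat a(z_t)-\mu(z_t)\|_2^2+\mathbb{E}\big[\|a-\mu(z_t)\|_2^2\mid z_t\big].
\end{equation*}
The second term does not depend on $\hat a$. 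The first term is nonnegative and vanishes only at $\hat a(z_t)=\mu(z_t)$. So the minimizer is unique and equals Eq.~\ref{eq:deterministic_anchor_mean}, pointwise in $z_t$. Integrating over $z_t$ shows the same predictor is Bayes-optimal for the population risk.

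\textbf{Step 2 (failure under multimodality).} The second claim is existential, so one explicit example suffices. I will present it in a slightly general form.

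Let the feasible anchor set $F\subset\mathbb{R}^2$ be nonconvex. For instance, take a corridor junction whose central region $B$ is occupied by an obstacle, so $F=\mathbb{R}^2\setminus B$. Let $p(a\mid z_t)=\sum_{k=1}^{K}\pi_k\,\mathcal N(\mu_k,\sigma^2 I)$, with means $\mu_k\in F$ placed in the separate branches (e.g.\ $K=2$, $\mu_{1,2}=\pm d\,e_1$), truncated or renormalized to $F$ if needed. Then $\mathbb{E}[a\mid z_t]=\sum_k\pi_k\mu_k+O(\cdot)$, where the truncation correction is negligible for small $\sigma$. In the symmetric case this is approximately $0$, which lies in $B$ and is therefore infeasible.

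It also lies outside every high-density mode. The density at the mean is about $\sum_k\pi_k(2\pi\sigma^2)^{-1}e^{-d^2/2\sigma^2}$, while the density at each mode is of order $(2\pi\sigma^2)^{-1}$. Their ratio goes to $0$ as $d/\sigma\to\infty$. Hence, for any threshold $\epsilon$, the superlevel set $\{p\ge\epsilon\}$ excludes the mean once the modes are well separated.

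A general remark makes the role of nonconvexity explicit. The mean always lies in the convex hull of the support. When the support is convex, the mean stays in it. When the support is nonconvex, $\mathrm{conv}(F)\setminus F$ is nonempty, and the example shows the mean can land there.

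\textbf{Main obstacle.} The main difficulty is making ``high-density mode'' and ``infeasible'' precise enough to give a clean statement. I would fix the modal region as a superlevel set $\{p(a\mid z_t)\ge\epsilon\}$ and feasibility as membership in $F$. I would also handle the truncation to $F$ by choosing $B$ to be a small ball around the origin. Then symmetry keeps the mean exactly at the origin without any approximation.
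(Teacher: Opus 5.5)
Your proposal is correct and follows the same route as the paper's proof sketch. In both, the conditional least-squares risk is minimized by the conditional mean, and when separated feasible modes lie on either side of an obstacle, that mean falls between them into a blocked, low-probability region. Your version is simply more explicit: the bias--variance decomposition and the symmetric two-component mixture truncated outside a centered ball (which places the mean exactly at the origin, inside the obstacle) make rigorous what the paper only asserts informally.
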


\begin{figure}[t]
\centerline{\includegraphics[width=0.9\columnwidth]{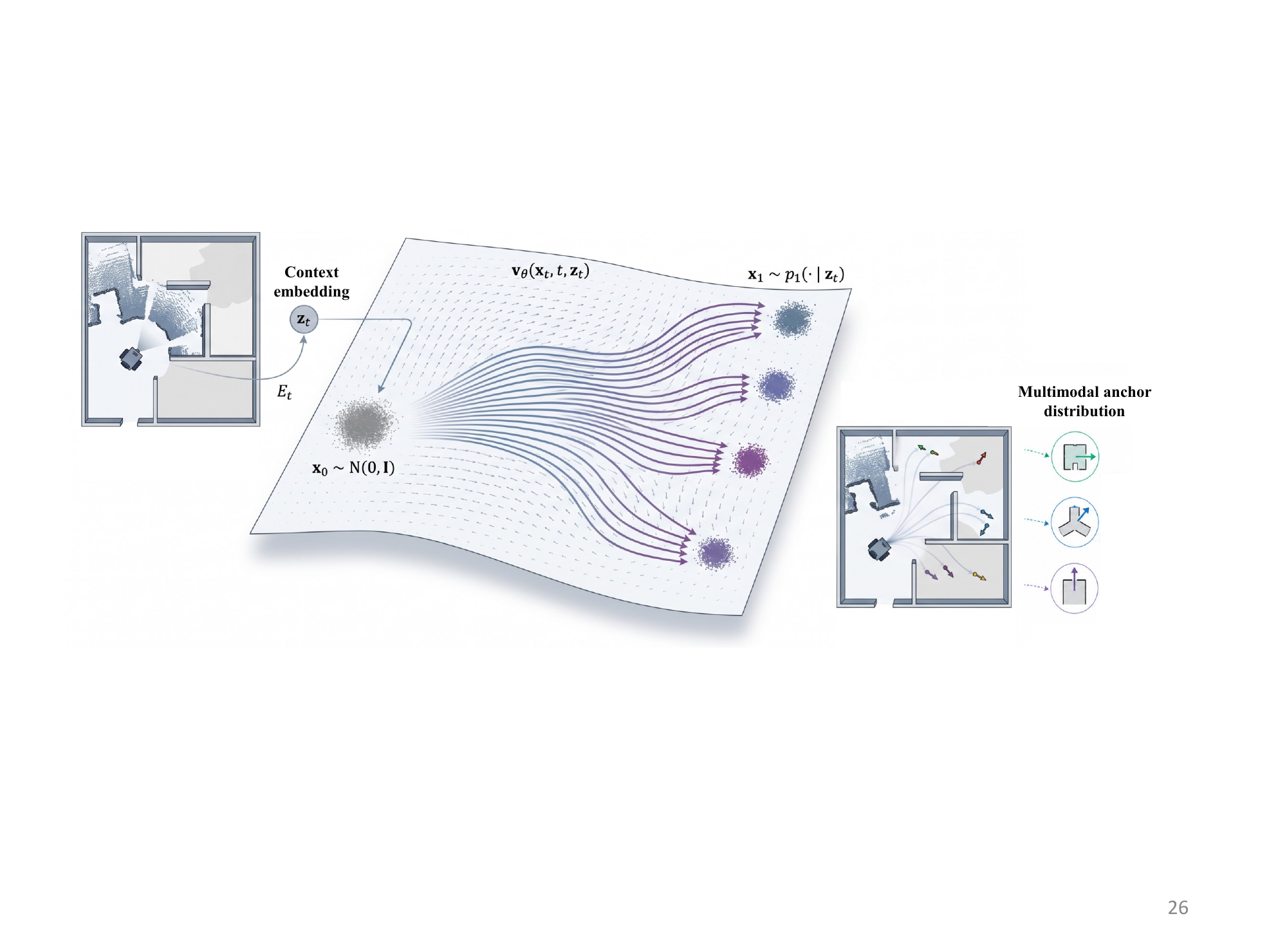}}
\vspace{-5pt}
\caption{\textbf{Conditional Flow Matching for multimodal long-horizon anchors.}
CFM transports samples from a simple prior $x_0\sim\mathcal N(0,I)$ toward the target anchor distribution $x_1\sim p_1(\cdot\mid z_t)$, yielding multiple plausible long-horizon exploration hypotheses under the same partial observation. Unlike directly generating full trajectories, this formulation models coarse exploration intent in a compact space and remains naturally compatible with downstream planning.}
\label{fig:cfm_anchor_transport}
\vskip -0.2in
\end{figure}

\textbf{\textit{Proof sketch.}}
Eq.~\ref{eq:deterministic_anchor_risk} is a standard conditional least-squares objective, whose minimizer is the conditional mean in Eq.~\ref{eq:deterministic_anchor_mean}. When the anchor distribution contains several separated feasible modes, however, this mean generally lies between them. Under obstacle-induced nonconvex geometry, such an averaged anchor can fall into a blocked or low-probability region, making it neither representative nor executable.

Proposition~\ref{prop:deterministic_anchor_collapse} formalizes the mismatch of single-goal prediction: under multimodal long-horizon futures, deterministic regression is mean-seeking rather than mode-preserving. Therefore, the correct learning target is not a single long-horizon goal, but the full conditional anchor distribution $p(a\mid z_t)$.

\textbf{A low-dimensional conditional anchor space.}
A way to preserve multimodality would be to generate full trajectories. However, the role of the long-horizon proposal is to express \emph{where to explore next} at a coarse semantic level, while detailed geometric feasibility is naturally handled by the explicit planner using the obstacle map $\hat O_t$. This suggests separating high-level long-horizon intent from low-level path realization, and introducing a compact variable for decision making.

We therefore define the long-horizon exploration anchor as
\begin{equation}
\label{eq:anchor_parameterization}
a = (a_{xy}, a_\theta) \in \mathbb{R}^4, \qquad
a_{xy} = (x_a, y_a), \quad
a_\theta = (\sin\theta_a, \cos\theta_a),
\end{equation}
where $a_{xy}$ and $a_\theta$ represent the position and orientation of the long-horizon goal, respectively. This low-dimensional parameterization isolates long-horizon exploration intent from detailed path geometry, leaving fine-grained execution to the downstream planner.

\textbf{Conditional transport via flow matching.}
Having established that the desired learning target is the conditional anchor distribution $p(a\mid z_t)$, we next need a principled way to model it. Conditional Flow Matching (CFM) is particularly suitable here, because it directly learns a conditional transport from a simple prior to the target distribution in continuous space, as illustrated in Fig.~\ref{fig:cfm_anchor_transport}.

During training, we set the source distribution as a standard Gaussian and the target samples as oracle anchors:
\begin{equation}
\label{eq:source_target_distributions}
x_0 \sim p_0=\mathcal{N}(0,I), \qquad
x_1 \sim p_1(\cdot\mid z_t),
\end{equation}
where $x_1$ is extracted from anchors in a high-reward path bank. Using the linear interpolation path
\begin{equation}
\label{eq:linear_interpolation_path}
x_t=(1-t)x_0+t x_1, \qquad t\sim\mathcal{U}(0,1),
\end{equation}
the target velocity is
\begin{equation}
\label{eq:target_velocity}
u_t=x_1-x_0.
\end{equation}
We then train a conditional velocity field $v_\theta(x_t,t,z_t)$ by minimizing
\begin{equation}
\label{eq:flow_matching_loss}
\mathcal{L}_{\mathrm{FM}}
=
\mathbb{E}_{x_0,x_1,t}
\left[
\|v_\theta(x_t,t,z_t)-u_t\|_2^2
\right].
\end{equation}

The key point is that Eq.~\ref{eq:flow_matching_loss} is not merely a heuristic diversity objective; it is a conditional regression problem whose population optimum recovers the transport field of the anchor distribution.

\begin{proposition}[CFM learns the conditional anchor transport field]
\label{prop:cfm_anchor_transport}
Let $p_t(\cdot\mid z_t)$ be the path induced by Eq.~\ref{eq:linear_interpolation_path} between $p_0$ and $p_1(\cdot\mid z_t)$. Then the population minimizer of Eq.~\ref{eq:flow_matching_loss} is
\begin{equation}
\label{eq:cfm_population_minimizer}
v^\star(x_t,t,z_t)=\mathbb{E}[u_t\mid x_t,t,z_t].
\end{equation}
Thus, CFM learns the conditional transport field induced by the entire anchor distribution $p_1(\cdot\mid z_t)$, rather than regressing a single deterministic long-horizon target.
\end{proposition}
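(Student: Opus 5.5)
The plan is to treat Eq.~\ref{eq:flow_matching_loss} as a conditional least-squares problem over the joint law of $(x_0,x_1,t,z_t)$. Then the argument behind Proposition~\ref{prop:deterministic_anchor_collapse} applies pointwise in the conditioning variables $(x_t,t,z_t)$ instead of in $z_t$ alone. Concretely, I will assume $\mathbb{E}\|u_t\|_2^2<\infty$. This holds because $p_0$ is Gaussian and the oracle anchors lie in a bounded region: $a_{xy}$ lies in the map and $a_\theta$ on the unit circle.

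Under this assumption, I will expand the loss around the conditional mean $m(x_t,t,z_t):=\mathbb{E}[u_t\mid x_t,t,z_t]$:
\begin{equation*}
\mathbb{E}\|v_\theta-u_t\|_2^2=\mathbb{E}\|v_\theta-m\|_2^2+\mathbb{E}\|m-u_t\|_2^2+2\,\mathbb{E}\langle v_\theta-m,\,m-u_t\rangle .
\end{equation*}
The cross term vanishes by the tower property. Since $v_\theta-m$ is measurable with respect to $(x_t,t,z_t)$, we get $\mathbb{E}[\langle v_\theta-m,m-u_t\rangle\mid x_t,t,z_t]=\langle v_\theta-m,\,m-\mathbb{E}[u_t\mid x_t,t,z_t]\rangle=0$. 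The second term does not depend on $\theta$. Hence the loss is minimized, over all measurable fields of $(x_t,t,z_t)$, exactly when $v=m$ almost everywhere with respect to the law of $(x_t,t,z_t)$. This gives Eq.~\ref{eq:cfm_population_minimizer}. The statement is about the population minimizer over all such functions, so I will state explicitly that $v_\theta$ is taken to range over a sufficiently rich class.

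For the interpretive clause, I will write $m$ explicitly and show that it depends on the whole distribution $p_1(\cdot\mid z_t)$. Fix $t\in(0,1)$. Given $x_1$, the variable $x_t$ is Gaussian with mean $tx_1$ and covariance $(1-t)^2I$. Using $x_0=(x_t-tx_1)/(1-t)$, we have $u_t=(x_1-x_t)/(1-t)$. Bayes' rule then gives
\begin{equation*}
v^\star(x,t,z_t)=\int \frac{x_1-x}{1-t}\,\frac{\mathcal N(x;tx_1,(1-t)^2I)\,p_1(x_1\mid z_t)}{p_t(x\mid z_t)}\,dx_1 .
\end{equation*}
This is a posterior-weighted mixture of the per-anchor velocities. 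Every mode of $p_1(\cdot\mid z_t)$ contributes to it, and no single target is singled out. I will then invoke the standard marginalization result of flow matching, which I am assuming rather than reproving: this field satisfies the continuity equation for $p_t(\cdot\mid z_t)$. Integrating the ODE from $p_0$ therefore yields $p_1(\cdot\mid z_t)$, multimodality included.

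Finally, I will contrast this with Eq.~\ref{eq:deterministic_anchor_mean}. There the conditioning is on $z_t$ only, so the minimizer collapses to a single point. Here the extra conditioning on the noisy state $x_t$ makes the regression target a field over $x$, and different initial samples $x_0$ flow to different modes.

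The main obstacle is rigor in the last step, namely ensuring that the continuity equation holds and the ODE is well posed. This requires regularity of $v^\star$ near $t=1$, where the factor $1/(1-t)$ appears. I would handle this by restricting to $t\in[0,1-\epsilon]$, or by citing the CFM literature under the boundedness assumption above. The minimizer identity itself is routine.
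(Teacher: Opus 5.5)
Your proof is correct and follows the same route as the paper. The paper's sketch is exactly the orthogonal (bias--variance) decomposition of the conditional least-squares risk around $\mathbb{E}[u_t\mid x_t,t,z_t]$, and you additionally make explicit the vanishing of the cross term, the square-integrability of $u_t$, and the richness of the function class. Your posterior-weighted formula for $v^\star$ and your appeal to the flow-matching marginalization (continuity-equation) result go beyond the paper, which leaves the interpretive ``Thus'' clause as informal commentary rather than proving it.
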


\textbf{\textit{Proof sketch.}}
Eq.~\ref{eq:flow_matching_loss} is a standard conditional least-squares objective. By the orthogonal decomposition of regression risk,
\[
\mathbb{E}\!\left[\|v-u_t\|_2^2\right]
=
\mathbb{E}\!\left[\|v-\mathbb{E}[u_t\mid x_t,t,z_t]\|_2^2\right]
+
\mathbb{E}\!\left[\|u_t-\mathbb{E}[u_t\mid x_t,t,z_t]\|_2^2\right].
\]
The second term is independent of $v$, so the minimum is achieved by Eq.~\ref{eq:cfm_population_minimizer}.

Proposition~\ref{prop:cfm_anchor_transport} shows why CFM is a principled choice for our setting: once the learning target is reformulated from a deterministic goal to a conditional anchor distribution, CFM directly learns its conditional transport in the low-dimensional anchor space. Compared with directly generating full trajectories, this formulation is easier to optimize, preserves multimodal long-horizon hypotheses, and remains naturally compatible with explicit planning.

\begin{figure}[t]
\centerline{\includegraphics[width=\columnwidth]{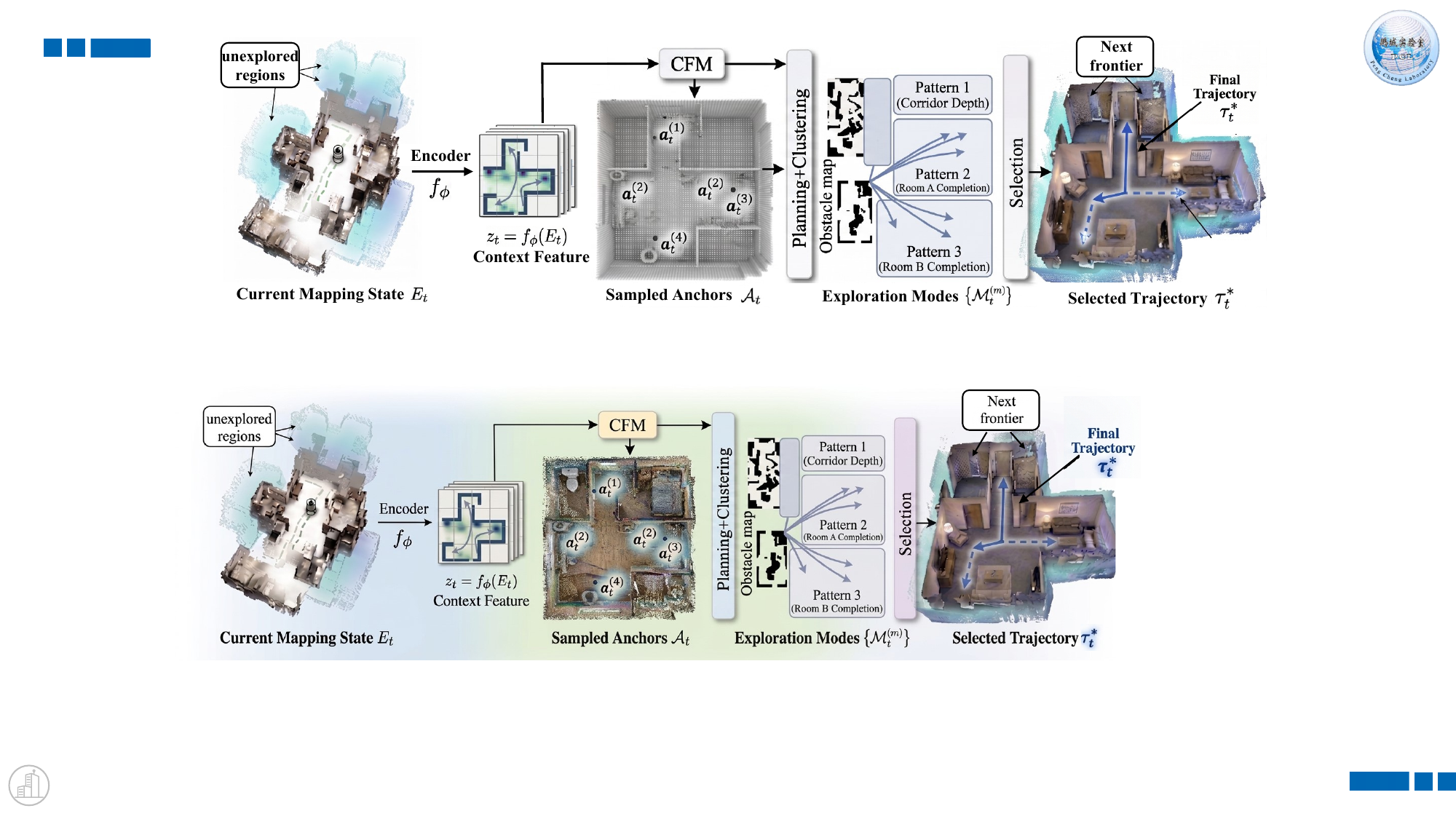}}
\vspace{-5pt}
\caption{\textbf{Overview of our method.} Given the current mapping state $E_t$, a mapping-progress encoder produces the context feature $z_t$. Conditional Flow Matching (CFM) then generates a multimodal set of long-horizon exploration anchors $\mathcal{A}_t$, which are converted into executable candidate paths and clustered into exploration modes $\{\mathcal{M}_t^{(m)}\}_{m=1}^{M_t}$. Finally, a hierarchical selection module first selects the most promising mode and then reranks paths within that mode to output the final trajectory $\tau_t^\star$.}
\label{fig:overview}
\vskip -0.2in
\end{figure}

During inference, we sample $K$ anchors from the learned conditional prior:
\begin{equation}
\label{eq:anchor_sampling}
\mathcal{A}_t=\{a_t^{(1)},\ldots,a_t^{(K)}\}, \qquad
a_t^{(k)}\sim p_\theta(a\mid z_t).
\end{equation}
These samples constitute the multimodal long-horizon exploration hypotheses at the current time step. To encourage coverage over distinct long-horizon exploration modes, we further introduce a diversity regularization term:
\begin{equation}
\label{eq:diversity_loss}
\mathcal{L}_{\mathrm{div}}
=
\frac{1}{K(K-1)}
\sum_{i\neq j}
\exp\left(
-\frac{\|a_t^{(i)}-a_t^{(j)}\|_2^2}{\sigma^2}
\right).
\end{equation}
Minimizing this term encourages different anchors to remain sufficiently dispersed, thereby covering diverse long-horizon exploration modes. In this way, the proposed CFM module provides a set of diverse, context-consistent exploration hypotheses for downstream planning and selection.

\subsection{Planning and Anchor-Guided Mode Construction.}
\label{sec:planning_mode_construction}
The previous subsection motivates modeling long-horizon exploration as a conditional anchor distribution rather than a single deterministic goal. However, anchors are only coarse intents and cannot be executed directly, and different anchors may still produce highly similar paths after planning. We therefore recover anchors into executable candidate paths via obstacle-aware planning, and further cluster them into representative exploration modes to reduce redundancy before hierarchical selection.

\textbf{From Anchors to Executable Paths.}
Given the current pose $c_t$, anchor $a_t^{(k)}$, and obstacle map $\hat{O}_t$, we obtain the candidate path via an explicit planner:

\begin{equation}
\label{eq:path_planning}
\tau_t^{(k)} = \textit{Plan}(c_t, a_t^{(k)}, \hat{O}_t),
\end{equation}
where $\textit{Plan}$ is implemented as an A*-based \cite{A*} shortest path planner. To characterize each candidate path for downstream reasoning, we further extract a compact path descriptor
\begin{equation}
\label{eq:path_descriptors}
\varphi(\tau_t^{(k)}) = [g_k,\ r_k,\ c_k,\ n_k,\ h_k,\ u_k],
\end{equation}
which respectively characterize the gain proxy, risk, cost, novelty, continuity deviation, and uncertainty statistics. They serve as structured intermediate features for the learned hierarchical selector in Sec.~\ref{sec:ood_hierarchical_selection}.

\textbf{From Candidate Paths to Exploration Modes.}
If selection is performed directly path-by-path on $\{\tau_t^{(k)}\}_{k=1}^{K}$, redundant candidates will interfere with decision-making. To circumvent this, we first discover more abstract exploration modes over the path set. Denoting the candidate path set as $\mathcal{T}_t = \{\tau_t^{(1)}, \ldots, \tau_t^{(K)}\}$, we define the distance between any two paths $\tau_i, \tau_j$ in the path space as:
\begin{equation}
\label{eq:path_distance}
d(\tau_i, \tau_j) = \lambda_1 d_{\mathrm{end}}(\tau_i, \tau_j) + \lambda_2 d_{\mathrm{shape}}(\tau_i, \tau_j) + \lambda_3 d_{\mathrm{prefix}}(\tau_i, \tau_j) + \lambda_4 d_{\mathrm{topo}}(\tau_i, \tau_j),
\end{equation}
where $d_{\mathrm{end}}$ measures the endpoint discrepancy, $d_{\mathrm{shape}}$ measures the geometric shape discrepancy of the paths after resampling, $d_{\mathrm{prefix}}$ measures the prefix trajectory discrepancy to characterize short-term execution behaviors, and $d_{\mathrm{topo}}$ measures the topological difference between the two paths in terms of obstacle circumvention.

Under the distance metric, we cluster the candidate paths into $M_t$ exploration modes:
\begin{equation}
\label{eq:mode_clustering}
\mathcal{T}_t \longrightarrow \{\mathcal{M}_t^{(1)}, \ldots, \mathcal{M}_t^{(M_t)}\}.
\end{equation}
Each mode $\mathcal{M}_t^{(m)}$ corresponds to a class of long-horizon exploration hypotheses, such as ``\textit{prioritize entering the left room}'', ``\textit{continue deep along the main corridor}'', or ``\textit{complete the current local region first before expanding farther}''. Furthermore, we define a representative path $\tilde{\tau}_t^{(m)}\in\mathcal{M}_t^{(m)}$ for each mode, which is implemented as the medoid within that mode.

Simultaneously, we define mode statistics:
\begin{equation}
\label{eq:mode_statistics}
\chi_t^{(m)} = \left[ |\mathcal{M}_t^{(m)}|, \ \bar{g}_m, \ \bar{r}_m, \ \operatorname{Var}(g)_m, \ \operatorname{Var}(r)_m, \ \rho_m \right],
\end{equation}
where $|\mathcal{M}_t^{(m)}|$ denotes the mode size, $\bar{g}_m$ and $\bar{r}_m$ denote the average gain and average risk within the mode, $\operatorname{Var}(g)_m$ and $\operatorname{Var}(r)_m$ measure the dispersion of gain and risk, and $\rho_m$ denotes the mode radius. Together with the representative path, these statistics provide a compact summary of the mode's quality and internal consistency. As with the path descriptors above, they are used as auxiliary structured inputs to the downstream learned scorer, rather than as standalone selection criteria.

Therefore, this module elevates multimodal anchor hypotheses into executable, comparable, and hierarchically structured exploration mode representations.

\subsection{Hierarchical Mode Selection}
\label{sec:ood_hierarchical_selection}
The previous subsection organizes multimodal anchors into a set of exploration modes, but mode construction alone does not resolve the final decision. In open-environment active 3D mapping, the agent's goal is to make robust long-horizon exploration decisions under partial observability and scene shift, rather than to score many redundant paths independently. If selection is still performed directly over all candidate paths, geometrically similar trajectories can interfere with scoring and bias the policy toward unstable local choices, which weakens generalization and reduces exploration efficiency. We therefore adopt a hierarchical selection strategy: first selecting a promising mode at the strategy level, and then reranking paths within that mode to obtain the final executable trajectory.

\textbf{Mode-Level Scoring.}
For each mode, we construct a mode representation:
\begin{equation}
\label{eq:mode_representation}
z_t^{(m)} = \Gamma(z_t, \tilde{\tau}_t^{(m)}, \chi_t^{(m)}),
\end{equation}
where $z_t$ is the shared context output by the context encoder, and $\Gamma(\cdot)$ represents the mode encoder used to fuse the shared context, representative path, and mode statistics.

In open environments, a mode worth executing should not only possess a high expected gain but also simultaneously satisfy the criteria of controllable risk, low uncertainty, and continuity with existing execution plans, while preserving a degree of exploration diversity when necessary. To this end, we predict multi-head scores via a mode scorer:
\begin{equation}
\label{eq:mode_scoring}
(G_m, R_m, U_m, C_m, D_m) = s_\eta(z_t^{(m)}),
\end{equation}
where $G_m$, $R_m$, $U_m$, $C_m$, and $D_m$ represent the expected gain, execution risk, uncertainty, mode switching or cost, and exploration diversity reward of the mode, respectively. Here, $s_\eta(\cdot)$ represents a learnable mode-level scoring network.
Accordingly, the overall mode-level score is defined as:
\begin{equation}
\label{eq:mode_overall_score}
S_m^{\mathrm{mode}} = w_G G_m - w_R R_m - w_U U_m - w_C C_m + w_D D_m,
\end{equation}
where $w_G, w_R, w_U, w_C, w_D \ge 0$ are learnable weights. Thus, the optimal mode selection at the current time step is:
\begin{equation}
\label{eq:mode_selection}
m_t^\star = \arg\max_m S_m^{\mathrm{mode}}.
\end{equation}

\textbf{Intra-Mode Local Reranking.}
Within the selected mode $\mathcal{M}_t^{(m_t^\star)}$, for any candidate path $\tau$, we construct a path-level representation:
\begin{equation}
\label{eq:path_representation}
\zeta_t(\tau) = \Lambda(z_t, \tau, \varphi(\tau), \chi_t^{(m_t^\star)}),
\end{equation}
where $\Lambda(\cdot)$ represents the feature fusion function of the local reranker. Subsequently, the local reranker outputs four path-level scoring heads $\hat{G}(\tau), \hat{R}(\tau), \hat{C}(\tau), \hat{H}(\tau)$, which respectively characterize path gain, path risk, path cost, and continuity penalty.
Consequently, the overall local path score is defined as:
\begin{equation}
\label{eq:path_overall_score}
S^{\mathrm{local}}(\tau) = \hat{G}(\tau) - \lambda_R \hat{R}(\tau) - \lambda_C \hat{C}(\tau) - \lambda_H \hat{H}(\tau).
\end{equation}

The final execution path is thus given by:
\begin{equation}
\label{eq:path_selection}
\tau_t^\star = \arg\max_{\tau\in\mathcal{M}_t^{(m_t^\star)}} S^{\mathrm{local}}(\tau).
\end{equation}
Thus, this section decouples long-range strategy reasoning from local path execution via hierarchical selection, improving stability and robustness in open-environment exploration.

\section{Experiments}

\subsection{Experimental Setup}
We evaluate our method on Matterport3D (MP3D)~\cite{mp3d} and AiMDoom~\cite{NBP}. For AiMDoom, we adopt a 70/30 train/test split within each difficulty level.
On AiMDoom, we report two standard active-mapping metrics: \emph{Final Coverage}, which measures the final scene coverage at the end of an episode, and \emph{AUC}, which evaluates reconstruction efficiency by the area under the coverage curve over time. Surface coverage is computed against the ground-truth mesh following~\cite{NBP}. We evaluate five trajectories per scene using identical random initial poses for different methods, and report the mean and standard deviation over all testing trajectories. For MP3D, we additionally report the completion metrics used in prior work, namely \emph{Comp. (\%)} and \emph{Comp. (cm)}.

We follow the same mapping-state construction and basic training protocol as~\cite{NBP} for fair comparison. Our model takes projected local map observations and trajectory history as input, and predicts an obstacle map for downstream proposal generation and planning.

\begin{table*}[b]
\centering
\footnotesize

\caption{\textbf{Evaluation results on AiMDoom Dataset.}
For each difficulty level, all baseline models, including ours, are trained from scratch on the corresponding training set to ensure a fair comparison.}
\label{tab:doom_main_experiments}
\tabcolsep=0.07cm
\begin{tabular*}{\textwidth}{@{\extracolsep{\fill}}l*{8}{c}}
\toprule
& \multicolumn{2}{c}{\textbf{Simple}} & \multicolumn{2}{c}{\textbf{Normal}} & \multicolumn{2}{c}{\textbf{Hard}} & \multicolumn{2}{c}{\textbf{Insane}} \\
\cmidrule(lr){2-3} \cmidrule(lr){4-5} \cmidrule(lr){6-7} \cmidrule(lr){8-9}
& Final Cov. & AUCs & Final Cov. & AUCs & Final Cov. & AUCs & Final Cov. & AUCs \\
\midrule
Random & 0.323\tiny{±0.156} & 0.270\tiny{±0.135} & 0.190\tiny{±0.124} & 0.152\tiny{±0.103} & 0.124\tiny{±0.082} & 0.088\tiny{±0.060} & 0.074\tiny{±0.048} & 0.050\tiny{±0.035} \\
FBE \cite{FBE} & 0.760\tiny{±0.174} & 0.605\tiny{±0.171} & 0.565\tiny{±0.139} & 0.415\tiny{±0.109} & 0.425\tiny{±0.114} & 0.311\tiny{±0.080} & 0.330\tiny{±0.097} & 0.239\tiny{±0.079} \\
SCONE \cite{scone} & 0.577\tiny{±0.173} & 0.483\tiny{±0.138} & 0.412\tiny{±0.114} & 0.313\tiny{±0.087} & 0.290\tiny{±0.093} & 0.210\tiny{±0.072} & 0.196\tiny{±0.079} & 0.140\tiny{±0.060} \\
MACARONS \cite{MACARONS} & 0.599\tiny{±0.200} & 0.479\tiny{±0.172} & 0.418\tiny{±0.120} & 0.314\tiny{±0.088} & 0.302\tiny{±0.097} & 0.218\tiny{±0.070} & 0.192\tiny{±0.078} & 0.139\tiny{±0.058} \\
NBP \cite{NBP} & 0.879\tiny{±0.142} & 0.692\tiny{±0.156} & 0.734\tiny{±0.142} & 0.526\tiny{±0.112} & 0.618\tiny{±0.153} & 0.432\tiny{±0.115} & 0.472\tiny{±0.095} & 0.312\tiny{±0.073} \\
\midrule
\textbf{Ours} & \textbf{0.913}\tiny{±0.138} & \textbf{0.707}\tiny{±0.141} & \textbf{0.781}\tiny{±0.125} & \textbf{0.554}\tiny{±0.096} & \textbf{0.662}\tiny{±0.139} & \textbf{0.446}\tiny{±0.112} & \textbf{0.534}\tiny{±0.073} & \textbf{0.325}\tiny{±0.089} \\
\bottomrule
\end{tabular*}

\vspace{0.8em}

\begin{minipage}[t]{0.48\textwidth}
\centering
\captionof{table}{\textbf{Comparison on the MP3D dataset.}}
\label{tab:comparison3}
\setlength{\tabcolsep}{2pt}
\begin{tabular*}{\linewidth}{@{\extracolsep{\fill}}lcc}
\toprule
\textbf{Method} & \textbf{Comp. (\%) $\uparrow$} & \textbf{Comp. (cm) $\downarrow$} \\
\midrule
Random & 45.67 & 26.53 \\
FBE \cite{FBE} & 71.18 & 9.78 \\
UPEN \cite{georgakis2022uncertainty} & 69.06 & 10.60 \\
OccAnt \cite{OccAnt} & 71.72 & 9.40 \\
ANM \cite{yan2023active} & 73.15 & 9.11 \\
NBP \cite{NBP} & 79.38 & 6.78 \\
\midrule
\textbf{Ours} & \textbf{82.67} & \textbf{5.67} \\
\bottomrule
\end{tabular*}
\end{minipage}
\hfill
\begin{minipage}[t]{0.48\textwidth}
\centering
\captionof{table}{\textbf{Train on AiMDoom test on MP3D.}}
\label{tab:cross_dataset_mp3d}
\setlength{\tabcolsep}{2pt}
\begin{tabular*}{\linewidth}{@{\extracolsep{\fill}}lcc}
\toprule
\textbf{Method} & \textbf{Comp. (\%) $\uparrow$} & \textbf{Comp. (cm) $\downarrow$} \\
\midrule
Random & 32.12 & 29.36 \\
FBE \cite{FBE} & 56.73 & 14.45 \\
UPEN \cite{georgakis2022uncertainty} & 54.70 & 13.09 \\
OccAnt \cite{OccAnt} & 58.49 & 14.18 \\
ANM \cite{yan2023active} & 60.22 & 11.78 \\
NBP \cite{NBP} & 61.81 & 10.22 \\
\midrule
\textbf{Ours} & \textbf{68.45} & \textbf{9.11} \\
\bottomrule
\end{tabular*}
\end{minipage}

\vspace{-0.8em}
\end{table*}

\subsection{Comparison with the State of the Art}
\textbf{AiMDoom dataset.}
Table~\ref{tab:doom_main_experiments} shows that our method achieves the best performance on all four AiMDoom difficulty levels. Compared with NBP~\cite{NBP}, it consistently improves both Final Coverage and AUC, with larger gains on the harder splits. In particular, Final Coverage improves from 0.734 to 0.781 on Normal, from 0.618 to 0.662 on Hard, and from 0.472 to 0.534 on Insane, while AUC also improves consistently. This trend is particularly meaningful because AiMDoom is designed to increase scene ambiguity and long-horizon planning difficulty as the level becomes harder. The results therefore suggest that preserving multiple plausible long-horizon hypotheses makes exploration more robust under increasing uncertainty, rather than committing prematurely to a single predicted branch.

\begin{figure}
\centerline{\includegraphics[width=\columnwidth]{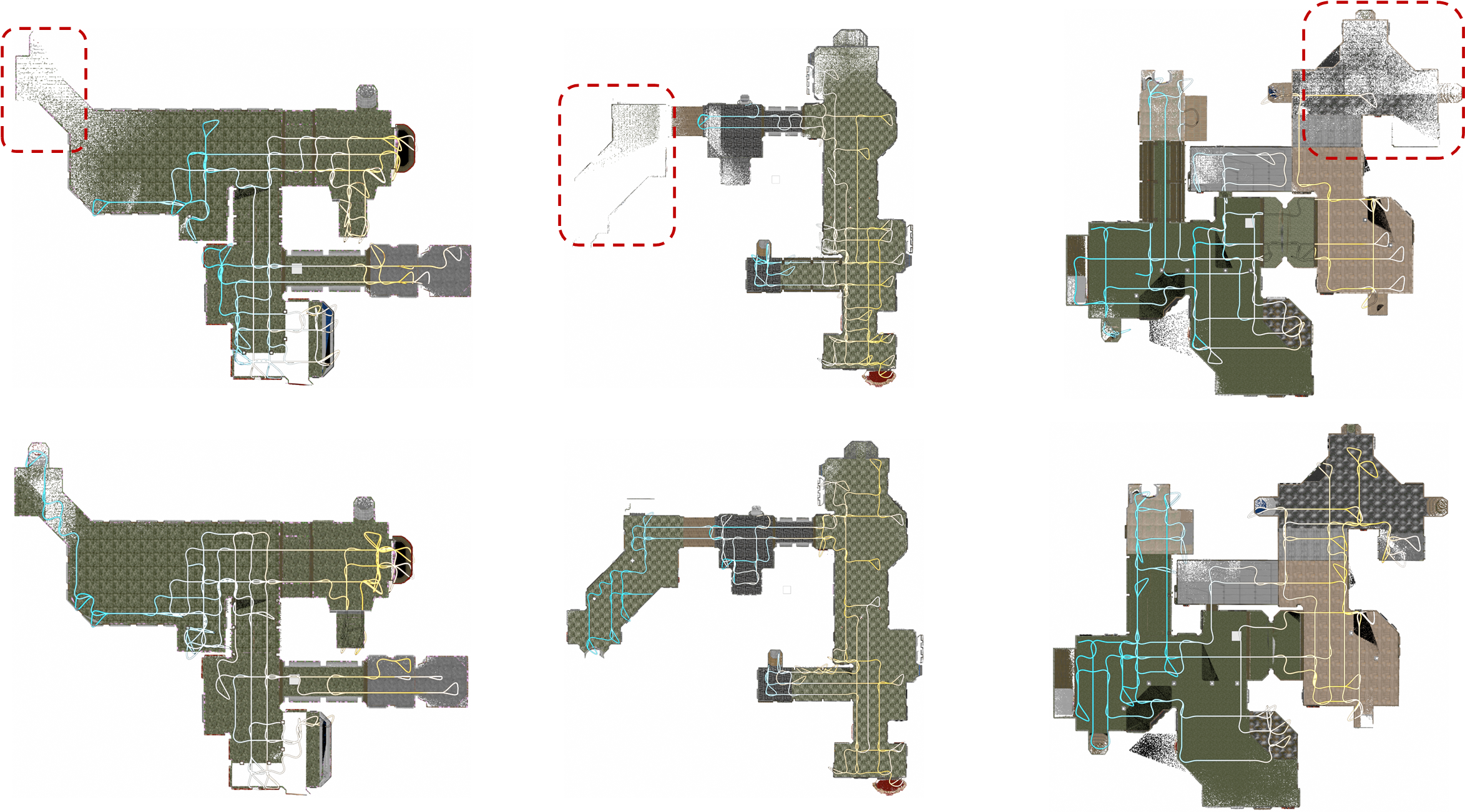}}
\vspace{-5pt}
\caption{\textbf{Qualitative comparison between the SOTA method NBP (Top) and Ours (Bottom).}
For each scene, both methods start from the same initial pose. The dashed boxes highlight representative regions that remain unexplored by NBP but are successfully recovered by our method. This difference is not merely local: NBP tends to commit early to a single dominant long-horizon branch, which induces a narrower global coverage pattern and leaves weakly indicated but still informative continuations unexplored. In contrast, our method preserves multiple plausible long-horizon exploration hypotheses before downstream planning and selection, enabling more balanced expansion across alternative scene continuations and ultimately yielding more complete scene coverage.}
\label{fig:sota-vis}
\vskip -0.2in
\end{figure}

\textbf{MP3D dataset.}
As shown in Table~\ref{tab:comparison3}, our method achieves the best results on MP3D, reaching 82.67 in Comp. (\%) and 5.67 in Comp. (cm). Compared with NBP~\cite{NBP}, this corresponds to a gain of 3.29 points in completion ratio and a reduction of 1.11 cm in completion error. The improvement in both metrics is important: the higher completion ratio indicates better scene coverage, while the lower completion error shows that the reconstructed geometry is also more accurate. These results suggest that the multimodal long-horizon modeling is not only effective on synthetic open-environment benchmarks, but also transfers well to realistic indoor scenes with more complex layouts.

\subsection{Generalization under Out-of-Distribution Settings}

We evaluate OOD generalization from two perspectives: \emph{within-dataset difficulty shift} on AiMDoom and \emph{cross-dataset transfer} from AiMDoom to MP3D. The results consistently support our central claim: preserving multiple long-horizon exploration hypotheses improves generalization, whereas single-goal methods generalize poorly under scene shift.

\begin{table}
\centering
\captionof{table}{\textbf{Generalization under difficulty shift on AiMDoom.}
We report two complementary protocols. 
Protocol A evaluates cross-difficulty transfer by training on the \textit{Simple} split only and directly testing on all difficulty levels . 
Protocol B evaluates leave-one-difficulty-out generalization by training on \textit{Simple} + \textit{Normal} + \textit{Hard} and directly testing on the held-out \textit{Insane} split. 
}
\label{tab:doom_generalization_all}
\vspace{2pt}
\scriptsize
\tabcolsep=0.06cm
\begin{tabular*}{\textwidth}{@{\extracolsep{\fill}}ll*{8}{c}}
\toprule
& & \multicolumn{2}{c}{\textbf{Simple}} & \multicolumn{2}{c}{\textbf{Normal}} & \multicolumn{2}{c}{\textbf{Hard}} & \multicolumn{2}{c}{\textbf{Insane}} \\
\cmidrule(lr){3-4} \cmidrule(lr){5-6} \cmidrule(lr){7-8} \cmidrule(lr){9-10}
\textbf{Protocol} & \textbf{Method} & Final Cov. & AUC & Final Cov. & AUC & Final Cov. & AUC & Final Cov. & AUC \\
\midrule

\multicolumn{10}{l}{\textbf{Protocol A: Train on Simple, test on all difficulty levels}} \\
\midrule
& Random & 0.323\tiny{$\pm$0.156} & 0.270\tiny{$\pm$0.135} & 0.174\tiny{$\pm$0.112} & 0.132\tiny{$\pm$0.091} & 0.101\tiny{$\pm$0.075} & 0.072\tiny{$\pm$0.054} & 0.062\tiny{$\pm$0.041} & 0.041\tiny{$\pm$0.030} \\
& FBE \cite{FBE} & 0.760\tiny{$\pm$0.174} & 0.605\tiny{$\pm$0.171} & 0.512\tiny{$\pm$0.155} & 0.364\tiny{$\pm$0.124} & 0.368\tiny{$\pm$0.132} & 0.255\tiny{$\pm$0.098} & 0.284\tiny{$\pm$0.110} & 0.188\tiny{$\pm$0.088} \\
& SCONE \cite{scone} & 0.577\tiny{$\pm$0.173} & 0.483\tiny{$\pm$0.138} & 0.354\tiny{$\pm$0.128} & 0.245\tiny{$\pm$0.104} & 0.213\tiny{$\pm$0.111} & 0.142\tiny{$\pm$0.082} & 0.145\tiny{$\pm$0.092} & 0.094\tiny{$\pm$0.071} \\
& MACARONS \cite{MACARONS} & 0.599\tiny{$\pm$0.200} & 0.479\tiny{$\pm$0.172} & 0.368\tiny{$\pm$0.141} & 0.252\tiny{$\pm$0.115} & 0.225\tiny{$\pm$0.124} & 0.148\tiny{$\pm$0.087} & 0.151\tiny{$\pm$0.096} & 0.098\tiny{$\pm$0.075} \\
& NBP \cite{NBP} & 0.879\tiny{$\pm$0.142} & 0.692\tiny{$\pm$0.156} & 0.695\tiny{$\pm$0.158} & 0.502\tiny{$\pm$0.128} & 0.562\tiny{$\pm$0.161} & 0.401\tiny{$\pm$0.122} & 0.434\tiny{$\pm$0.114} & 0.275\tiny{$\pm$0.098} \\
& \textbf{Ours} & \textbf{0.913}\tiny{$\pm$0.138} & \textbf{0.707}\tiny{$\pm$0.141} & \textbf{0.727}\tiny{$\pm$0.152} & \textbf{0.526}\tiny{$\pm$0.133} & \textbf{0.597}\tiny{$\pm$0.174} & \textbf{0.403}\tiny{$\pm$0.121} & \textbf{0.467}\tiny{$\pm$0.106} & \textbf{0.313}\tiny{$\pm$0.101} \\
\midrule

\multicolumn{10}{l}{\textbf{Protocol B: Train on Simple + Normal + Hard, test on held-out Insane}} \\
\midrule
& Random & -- & -- & -- & -- & -- & -- & 0.068\tiny{$\pm$0.045} & 0.048\tiny{$\pm$0.038} \\
& FBE \cite{FBE} & -- & -- & -- & -- & -- & -- & 0.312\tiny{$\pm$0.124} & 0.201\tiny{$\pm$0.105} \\
& SCONE \cite{scone} & -- & -- & -- & -- & -- & -- & 0.168\tiny{$\pm$0.101} & 0.112\tiny{$\pm$0.082} \\
& MACARONS \cite{MACARONS} & -- & -- & -- & -- & -- & -- & 0.182\tiny{$\pm$0.115} & 0.115\tiny{$\pm$0.088} \\
& NBP \cite{NBP} & -- & -- & -- & -- & -- & -- & 0.421\tiny{$\pm$0.128} & 0.302\tiny{$\pm$0.112} \\
& \textbf{Ours} & -- & -- & -- & -- & -- & -- & \textbf{0.438}\tiny{$\pm$0.119} & \textbf{0.317}\tiny{$\pm$0.089} \\
\bottomrule
\end{tabular*}
\vspace{-0.8em}
\end{table}

\textbf{Cross-dataset transfer from AiMDoom to MP3D.}
Table~\ref{tab:cross_dataset_mp3d} reports zero-shot transfer from AiMDoom to MP3D. Despite the large domain gap, our method achieves the best performance, reaching 68.45 in Comp. (\%) and 9.11 in Comp. (cm), improving over NBP by 6.64 points in completion ratio and 1.11 cm in completion error. This result suggests that multimodal long-horizon modeling transfers better across unseen scene distributions than single-goal prediction.

\textbf{Within-dataset difficulty shift on AiMDoom.}
Table~\ref{tab:doom_generalization_all} reports two OOD protocols on AiMDoom. Under both settings, our method consistently outperforms NBP, with especially clear gains on the harder splits. This suggests that preserving multiple plausible long-horizon exploration paths is more robust to increasing scene ambiguity than single-goal prediction, which tends to commit early to one branch and thus generalizes poorly under difficulty shift.

\subsection{Ablation Study}
To verify the generalization of our method to unseen open-world scenes, all experiments in this subsection are conducted under a cross-dataset setting: \textit{trained on AiMDoom and tested on MP3D}.

\paragraph{Component Analysis.}
Table~\ref{tab:nbp_progressive_ablation} reports a progressive validation of the proposed pipeline, where MAG, EMC, and HMS denote multimodal anchor generation, exploration-mode clustering, and hierarchical mode/path selection, respectively. As modules are introduced stage by stage, the performance improves consistently under the cross-dataset OOD setting. The largest gain comes from adding MAG, which improves Comp. (\%) from 61.81 to 66.21 and reduces Comp. (cm) from 10.22 to 9.64. This is consistent with our central claim that the main limitation of existing methods lies in deterministic single-goal prediction: replacing it with multimodal anchor generation already yields a much stronger long-horizon decision mechanism. Adding EMC further improves the results to 67.60 and 9.36, showing that organizing raw candidate paths into exploration modes is beneficial for reducing redundancy among geometrically similar proposals. Finally, adding HMS yields the best performance of 68.45 and 9.11, suggesting that hierarchical mode/path selection further stabilizes downstream decision making by decoupling high-level strategy choice from local path reranking.

\begin{wraptable}{r}{0.50\textwidth}
\vspace{-8pt}
\centering
\scriptsize
\setlength{\tabcolsep}{3pt}
\renewcommand{\arraystretch}{0.95}
\caption{\textbf{Progressive validation of the proposed pipeline under the cross-dataset OOD setting.}}
\label{tab:nbp_progressive_ablation}
\begin{tabular}{cccc|cc}
\toprule
\multicolumn{4}{c|}{Method} & \multicolumn{2}{c}{OOD Setting} \\
\midrule
\multicolumn{1}{c}{Baseline} &
\multicolumn{1}{c}{MAG} &
\multicolumn{1}{c}{EMC} &
\multicolumn{1}{c|}{HMS} &
\multicolumn{1}{c}{Comp. (\%) $\uparrow$} &
\multicolumn{1}{c}{Comp. (cm) $\downarrow$} \\
\midrule
$\checkmark$ &  &  &  & 61.81 & 10.22 \\
$\checkmark$ & $\checkmark$ &  &  & 66.21 & 9.64 \\
$\checkmark$ & $\checkmark$ & $\checkmark$ &  & 67.60 & 9.36 \\
$\checkmark$ & $\checkmark$ & $\checkmark$ & $\checkmark$ & \textbf{68.45} & \textbf{9.11} \\
\bottomrule
\end{tabular}
\vspace{-10pt}
\end{wraptable}

\section{Conclusion}

In this paper, we study active 3D mapping in open environments, where partial observability and scene shift make deterministic single-goal prediction difficult to generalize. To address this issue, we reformulate long-horizon decision making as a conditional multimodal proposal generation problem and propose a flow-matching-based framework for multimodal anchor modeling. The generated anchors are further converted into executable candidate paths, compressed into exploration modes, and selected through a hierarchical decision process. Extensive experiments show that our method consistently improves reconstruction quality, exploration efficiency, and robustness under both difficulty shift and cross-dataset transfer. These results suggest that preserving multimodal long-horizon exploration hypotheses is a promising direction for open-environment active mapping.

\paragraph{Limitations.} The current framework has several limitations. Since our method models multimodal exploration in a coarse anchor space rather than directly reasoning over full future map completion, the utility of distant but weakly indicated regions may not always be captured well. In addition, the downstream planning and clustering stages depend on the quality of the predicted obstacle map and candidate-path structure, so errors in these intermediate representations can affect mode construction and final selection. Future work may benefit from stronger global uncertainty estimation, more topology-aware proposal generation, and tighter coupling between multimodal hypothesis generation and downstream planning.

\paragraph{Acknowledgment.} This work was supported by the National Natural Science Foundation of China (Grant No. 62636008, 62376186, 62472333, and 61932009).






{
\small
\bibliographystyle{unsrtnat}
\bibliography{Styles/example_paper}
}

\clearpage

\section*{NeurIPS Paper Checklist}

\begin{enumerate}

\item {\bf Claims}
    \item[] Question: Do the main claims made in the abstract and introduction accurately reflect the paper's contributions and scope?
    \item[] Answer: \answerYes{}{} 
    \item[] Justification: We clarify the research contributions and scope of the study in the abstract and introduction of the paper.
    \item[] Guidelines:
    \begin{itemize}
        \item The answer NA means that the abstract and introduction do not include the claims made in the paper.
        \item The abstract and/or introduction should clearly state the claims made, including the contributions made in the paper and important assumptions and limitations. A No or NA answer to this question will not be perceived well by the reviewers. 
        \item The claims made should match theoretical and experimental results, and reflect how much the results can be expected to generalize to other settings. 
        \item It is fine to include aspirational goals as motivation as long as it is clear that these goals are not attained by the paper. 
    \end{itemize}

\item {\bf Limitations}
    \item[] Question: Does the paper discuss the limitations of the work performed by the authors?
    \item[] Answer: \answerYes{} 
    \item[] Justification: Limitations and future directions are discussed in the Conclusion.
    \item[] Guidelines:
    \begin{itemize}
        \item The answer NA means that the paper has no limitation while the answer No means that the paper has limitations, but those are not discussed in the paper. 
        \item The authors are encouraged to create a separate "Limitations" section in their paper.
        \item The paper should point out any strong assumptions and how robust the results are to violations of these assumptions (e.g., independence assumptions, noiseless settings, model well-specification, asymptotic approximations only holding locally). The authors should reflect on how these assumptions might be violated in practice and what the implications would be.
        \item The authors should reflect on the scope of the claims made, e.g., if the approach was only tested on a few datasets or with a few runs. In general, empirical results often depend on implicit assumptions, which should be articulated.
        \item The authors should reflect on the factors that influence the performance of the approach. For example, a facial recognition algorithm may perform poorly when image resolution is low or images are taken in low lighting. Or a speech-to-text system might not be used reliably to provide closed captions for online lectures because it fails to handle technical jargon.
        \item The authors should discuss the computational efficiency of the proposed algorithms and how they scale with dataset size.
        \item If applicable, the authors should discuss possible limitations of their approach to address problems of privacy and fairness.
        \item While the authors might fear that complete honesty about limitations might be used by reviewers as grounds for rejection, a worse outcome might be that reviewers discover limitations that aren't acknowledged in the paper. The authors should use their best judgment and recognize that individual actions in favor of transparency play an important role in developing norms that preserve the integrity of the community. Reviewers will be specifically instructed to not penalize honesty concerning limitations.
    \end{itemize}

\item {\bf Theory assumptions and proofs}
    \item[] Question: For each theoretical result, does the paper provide the full set of assumptions and a complete (and correct) proof?
    \item[] Answer: \answerNo{} 
    \item[] Justification: The main text includes theoretical analysis in Sec.~\ref{sec:cfm_multimodal_anchors}, where Proposition~\ref{prop:deterministic_anchor_collapse} and Proposition~\ref{prop:cfm_anchor_transport} justify the formulation of multimodal anchor modeling and the use of CFM. The main text provides proof sketches; complete proofs are omitted from this version.
    \item[] Guidelines:
    \begin{itemize}
        \item The answer NA means that the paper does not include theoretical results. 
        \item All the theorems, formulas, and proofs in the paper should be numbered and cross-referenced.
        \item All assumptions should be clearly stated or referenced in the statement of any theorems.
        \item The proofs can either appear in the main paper or the supplemental material, but if they appear in the supplemental material, the authors are encouraged to provide a short proof sketch to provide intuition. 
        \item Inversely, any informal proof provided in the core of the paper should be complemented by formal proofs provided in appendix or supplemental material.
        \item Theorems and Lemmas that the proof relies upon should be properly referenced. 
    \end{itemize}

    \item {\bf Experimental result reproducibility}
    \item[] Question: Does the paper fully disclose all the information needed to reproduce the main experimental results of the paper to the extent that it affects the main claims and/or conclusions of the paper (regardless of whether the code and data are provided or not)?
    \item[] Answer: \answerNo{} 
    \item[] Justification: The extended network and hyperparameter details are not included in this version.
    \item[] Guidelines:
    \begin{itemize}
        \item The answer NA means that the paper does not include experiments.
        \item If the paper includes experiments, a No answer to this question will not be perceived well by the reviewers: Making the paper reproducible is important, regardless of whether the code and data are provided or not.
        \item If the contribution is a dataset and/or model, the authors should describe the steps taken to make their results reproducible or verifiable. 
        \item Depending on the contribution, reproducibility can be accomplished in various ways. For example, if the contribution is a novel architecture, describing the architecture fully might suffice, or if the contribution is a specific model and empirical evaluation, it may be necessary to either make it possible for others to replicate the model with the same dataset, or provide access to the model. In general. releasing code and data is often one good way to accomplish this, but reproducibility can also be provided via detailed instructions for how to replicate the results, access to a hosted model (e.g., in the case of a large language model), releasing of a model checkpoint, or other means that are appropriate to the research performed.
        \item While NeurIPS does not require releasing code, the conference does require all submissions to provide some reasonable avenue for reproducibility, which may depend on the nature of the contribution. For example
        \begin{enumerate}
            \item If the contribution is primarily a new algorithm, the paper should make it clear how to reproduce that algorithm.
            \item If the contribution is primarily a new model architecture, the paper should describe the architecture clearly and fully.
            \item If the contribution is a new model (e.g., a large language model), then there should either be a way to access this model for reproducing the results or a way to reproduce the model (e.g., with an open-source dataset or instructions for how to construct the dataset).
            \item We recognize that reproducibility may be tricky in some cases, in which case authors are welcome to describe the particular way they provide for reproducibility. In the case of closed-source models, it may be that access to the model is limited in some way (e.g., to registered users), but it should be possible for other researchers to have some path to reproducing or verifying the results.
        \end{enumerate}
    \end{itemize}

\item {\bf Open access to data and code}
    \item[] Question: Does the paper provide open access to the data and code, with sufficient instructions to faithfully reproduce the main experimental results, as described in supplemental material?
    \item[] Answer: \answerYes{} 
    \item[] Justification:  The code will be released upon acceptance. The datasets used derive from AimDooM and MP3d; they are already public and available online.
    \item[] Guidelines:
    \begin{itemize}
        \item The answer NA means that paper does not include experiments requiring code.
        \item Please see the NeurIPS code and data submission guidelines (\url{https://nips.cc/public/guides/CodeSubmissionPolicy}) for more details.
        \item While we encourage the release of code and data, we understand that this might not be possible, so “No” is an acceptable answer. Papers cannot be rejected simply for not including code, unless this is central to the contribution (e.g., for a new open-source benchmark).
        \item The instructions should contain the exact command and environment needed to run to reproduce the results. See the NeurIPS code and data submission guidelines (\url{https://nips.cc/public/guides/CodeSubmissionPolicy}) for more details.
        \item The authors should provide instructions on data access and preparation, including how to access the raw data, preprocessed data, intermediate data, and generated data, etc.
        \item The authors should provide scripts to reproduce all experimental results for the new proposed method and baselines. If only a subset of experiments are reproducible, they should state which ones are omitted from the script and why.
        \item At submission time, to preserve anonymity, the authors should release anonymized versions (if applicable).
        \item Providing as much information as possible in supplemental material (appended to the paper) is recommended, but including URLs to data and code is permitted.
    \end{itemize}

\item {\bf Experimental setting/details}
    \item[] Question: Does the paper specify all the training and test details (e.g., data splits, hyperparameters, how they were chosen, type of optimizer, etc.) necessary to understand the results?
    \item[] Answer: \answerYes{} 
    \item[] Justification: The details necessary to faithfully reproduce our experiments (training procedure, data splits, optimizer, etc.) are included in the paper. 
    \item[] Guidelines:
    \begin{itemize}
        \item The answer NA means that the paper does not include experiments.
        \item The experimental setting should be presented in the core of the paper to a level of detail that is necessary to appreciate the results and make sense of them.
        \item The full details can be provided either with the code, in appendix, or as supplemental material.
    \end{itemize}

\item {\bf Experiment statistical significance}
    \item[] Question: Does the paper report error bars suitably and correctly defined or other appropriate information about the statistical significance of the experiments?
    \item[] Answer: \answerYes{} 
    \item[] Justification: We report average results of multiple runs in our experimental section. Our paper does not report error bars.
    \item[] Guidelines:
    \begin{itemize}
        \item The answer NA means that the paper does not include experiments.
        \item The authors should answer "Yes" if the results are accompanied by error bars, confidence intervals, or statistical significance tests, at least for the experiments that support the main claims of the paper.
        \item The factors of variability that the error bars are capturing should be clearly stated (for example, train/test split, initialization, random drawing of some parameter, or overall run with given experimental conditions).
        \item The method for calculating the error bars should be explained (closed form formula, call to a library function, bootstrap, etc.)
        \item The assumptions made should be given (e.g., Normally distributed errors).
        \item It should be clear whether the error bar is the standard deviation or the standard error of the mean.
        \item It is OK to report 1-sigma error bars, but one should state it. The authors should preferably report a 2-sigma error bar than state that they have a 96\% CI, if the hypothesis of Normality of errors is not verified.
        \item For asymmetric distributions, the authors should be careful not to show in tables or figures symmetric error bars that would yield results that are out of range (e.g. negative error rates).
        \item If error bars are reported in tables or plots, The authors should explain in the text how they were calculated and reference the corresponding figures or tables in the text.
    \end{itemize}

\item {\bf Experiments compute resources}
    \item[] Question: For each experiment, does the paper provide sufficient information on the computer resources (type of compute workers, memory, time of execution) needed to reproduce the experiments?
    \item[] Answer: \answerNo{} 
    \item[] Justification: This version does not provide detailed hardware and runtime information for each experiment.
    \item[] Guidelines:
    \begin{itemize}
        \item The answer NA means that the paper does not include experiments.
        \item The paper should indicate the type of compute workers CPU or GPU, internal cluster, or cloud provider, including relevant memory and storage.
        \item The paper should provide the amount of compute required for each of the individual experimental runs as well as estimate the total compute. 
        \item The paper should disclose whether the full research project required more compute than the experiments reported in the paper (e.g., preliminary or failed experiments that didn't make it into the paper). 
    \end{itemize}
    
\item {\bf Code of ethics}
    \item[] Question: Does the research conducted in the paper conform, in every respect, with the NeurIPS Code of Ethics \url{https://neurips.cc/public/EthicsGuidelines}?
    \item[] Answer: \answerYes{} 
    \item[] Justification: We reviewed and ensured that the present work respects the NeurIPS Code of Ethics at each individual part.
    \item[] Guidelines:
    \begin{itemize}
        \item The answer NA means that the authors have not reviewed the NeurIPS Code of Ethics.
        \item If the authors answer No, they should explain the special circumstances that require a deviation from the Code of Ethics.
        \item The authors should make sure to preserve anonymity (e.g., if there is a special consideration due to laws or regulations in their jurisdiction).
    \end{itemize}

\item {\bf Broader impacts}
    \item[] Question: Does the paper discuss both potential positive societal impacts and negative societal impacts of the work performed?
    \item[] Answer: \answerNA{} 
    \item[] Justification:  Our paper is not highly related to societal impacts.
    \item[] Guidelines:
    \begin{itemize}
        \item The answer NA means that there is no societal impact of the work performed.
        \item If the authors answer NA or No, they should explain why their work has no societal impact or why the paper does not address societal impact.
        \item Examples of negative societal impacts include potential malicious or unintended uses (e.g., disinformation, generating fake profiles, surveillance), fairness considerations (e.g., deployment of technologies that could make decisions that unfairly impact specific groups), privacy considerations, and security considerations.
        \item The conference expects that many papers will be foundational research and not tied to particular applications, let alone deployments. However, if there is a direct path to any negative applications, the authors should point it out. For example, it is legitimate to point out that an improvement in the quality of generative models could be used to generate deepfakes for disinformation. On the other hand, it is not needed to point out that a generic algorithm for optimizing neural networks could enable people to train models that generate Deepfakes faster.
        \item The authors should consider possible harms that could arise when the technology is being used as intended and functioning correctly, harms that could arise when the technology is being used as intended but gives incorrect results, and harms following from (intentional or unintentional) misuse of the technology.
        \item If there are negative societal impacts, the authors could also discuss possible mitigation strategies (e.g., gated release of models, providing defenses in addition to attacks, mechanisms for monitoring misuse, mechanisms to monitor how a system learns from feedback over time, improving the efficiency and accessibility of ML).
    \end{itemize}
    
\item {\bf Safeguards}
    \item[] Question: Does the paper describe safeguards that have been put in place for responsible release of data or models that have a high risk for misuse (e.g., pretrained language models, image generators, or scraped datasets)?
    \item[] Answer: \answerNA{} 
    \item[] Justification:  This paper does not pose any such risk.
    \item[] Guidelines:
    \begin{itemize}
        \item The answer NA means that the paper poses no such risks.
        \item Released models that have a high risk for misuse or dual-use should be released with necessary safeguards to allow for controlled use of the model, for example by requiring that users adhere to usage guidelines or restrictions to access the model or implementing safety filters. 
        \item Datasets that have been scraped from the Internet could pose safety risks. The authors should describe how they avoided releasing unsafe images.
        \item We recognize that providing effective safeguards is challenging, and many papers do not require this, but we encourage authors to take this into account and make a best faith effort.
    \end{itemize}

\item {\bf Licenses for existing assets}
    \item[] Question: Are the creators or original owners of assets (e.g., code, data, models), used in the paper, properly credited and are the license and terms of use explicitly mentioned and properly respected?
    \item[] Answer: \answerYes{} 
    \item[] Justification: All the datasets used to train our models, and the code for the papers utilized as benchmarks to evaluate our algorithms are cited.
    \item[] Guidelines:
    \begin{itemize}
        \item The answer NA means that the paper does not use existing assets.
        \item The authors should cite the original paper that produced the code package or dataset.
        \item The authors should state which version of the asset is used and, if possible, include a URL.
        \item The name of the license (e.g., CC-BY 4.0) should be included for each asset.
        \item For scraped data from a particular source (e.g., website), the copyright and terms of service of that source should be provided.
        \item If assets are released, the license, copyright information, and terms of use in the package should be provided. For popular datasets, \url{paperswithcode.com/datasets} has curated licenses for some datasets. Their licensing guide can help determine the license of a dataset.
        \item For existing datasets that are re-packaged, both the original license and the license of the derived asset (if it has changed) should be provided.
        \item If this information is not available online, the authors are encouraged to reach out to the asset's creators.
    \end{itemize}

\item {\bf New assets}
    \item[] Question: Are new assets introduced in the paper well documented and is the documentation provided alongside the assets?
    \item[] Answer: \answerNA{} 
    \item[] Justification: We do not release any new assets with this submission. However, we will make the code as well as trained models publicly available if/when the paper is accepted.
    \item[] Guidelines:
    \begin{itemize}
        \item The answer NA means that the paper does not release new assets.
        \item Researchers should communicate the details of the dataset/code/model as part of their submissions via structured templates. This includes details about training, license, limitations, etc. 
        \item The paper should discuss whether and how consent was obtained from people whose asset is used.
        \item At submission time, remember to anonymize your assets (if applicable). You can either create an anonymized URL or include an anonymized zip file.
    \end{itemize}

\item {\bf Crowdsourcing and research with human subjects}
    \item[] Question: For crowdsourcing experiments and research with human subjects, does the paper include the full text of instructions given to participants and screenshots, if applicable, as well as details about compensation (if any)? 
    \item[] Answer: \answerNA{} 
    \item[] Justification: This research did not involve any crowdsourcing experiments or studies with human subjects.
    \item[] Guidelines:
    \begin{itemize}
        \item The answer NA means that the paper does not involve crowdsourcing nor research with human subjects.
        \item Including this information in the supplemental material is fine, but if the main contribution of the paper involves human subjects, then as much detail as possible should be included in the main paper. 
        \item According to the NeurIPS Code of Ethics, workers involved in data collection, curation, or other labor should be paid at least the minimum wage in the country of the data collector. 
    \end{itemize}

\item {\bf Institutional review board (IRB) approvals or equivalent for research with human subjects}
    \item[] Question: Does the paper describe potential risks incurred by study participants, whether such risks were disclosed to the subjects, and whether Institutional Review Board (IRB) approvals (or an equivalent approval/review based on the requirements of your country or institution) were obtained?
    \item[] Answer: \answerNA{} 
    \item[] Justification:  This research did not involve studies with human subjects
    \item[] Guidelines:
    \begin{itemize}
        \item The answer NA means that the paper does not involve crowdsourcing nor research with human subjects.
        \item Depending on the country in which research is conducted, IRB approval (or equivalent) may be required for any human subjects research. If you obtained IRB approval, you should clearly state this in the paper. 
        \item We recognize that the procedures for this may vary significantly between institutions and locations, and we expect authors to adhere to the NeurIPS Code of Ethics and the guidelines for their institution. 
        \item For initial submissions, do not include any information that would break anonymity (if applicable), such as the institution conducting the review.
    \end{itemize}

\item {\bf Declaration of LLM usage}
    \item[] Question: Does the paper describe the usage of LLMs if it is an important, original, or non-standard component of the core methods in this research? Note that if the LLM is used only for writing, editing, or formatting purposes and does not impact the core methodology, scientific rigorousness, or originality of the research, declaration is not required.
    \item[] Answer: \answerNA{} 
    \item[] Justification: LLM is not an important, original, or non-standard component of the core methods in this research.
    \item[] Guidelines:
    \begin{itemize}
        \item The answer NA means that the core method development in this research does not involve LLMs as any important, original, or non-standard components.
        \item Please refer to our LLM policy (\url{https://neurips.cc/Conferences/2025/LLM}) for what should or should not be described.
    \end{itemize}

\end{enumerate}

\end{document}